\newcommand{\E}{\mathbb{E}}
\newcommand{\R}{\mathbb{R}}
\newcommand{\cP}{\mathcal{P}}
\newcommand{\cX}{\mathcal{X}}
\newcommand{\cU}{\mathcal{U}}
\newcommand{\cZ}{\mathcal{Z}}
\newcommand{\T}{^{\top}}
\DeclareMathOperator*{\argmax}{arg\,max}
\title{Scaling Up Robust MDPs by Reinforcement Learning}
\author{
Aviv Tamar\\
Electrical Engineering Department\\
The Technion - Israel Institute of Technology\\
Haifa, Israel 32000\\
\texttt{avivt@tx.technion.ac.il} \\
\And
Huan Xu \\
Mechanical Engineering Department \\
National University of Singapore \\
Singapore 117575, Singapore \\
\texttt{mpexuh@nus.edu.sg} \\
\And
Shie Mannor \\
Electrical Engineering Department\\
The Technion - Israel Institute of Technology\\
Haifa, Israel 32000\\
\texttt{shie@ee.technion.ac.il} \\
}
\newtheorem{theorem}{Theorem}
\newtheorem{assumption}[theorem]{Assumption}
\newtheorem{proposition}[theorem]{Proposition}
\newtheorem{corollary}[theorem]{Corollary}
\begin{document}

\maketitle

\begin{abstract}
We consider {\em large-scale} Markov decision processes (MDPs) with parameter uncertainty, under the robust MDP paradigm. Previous studies showed that robust MDPs, based on a minimax approach to handle uncertainty, can be solved using dynamic programming for {\em small to medium sized} problems. However, due to the ``curse of dimensionality'', MDPs that model real-life problems are typically prohibitively large for such approaches. In this work we employ a reinforcement learning approach to tackle this planning problem: we develop a  {\em robust approximate dynamic programming} method based on a projected fixed point equation to approximately solve large scale robust MDPs. We show that the proposed method provably succeeds under certain technical conditions, and demonstrate its effectiveness through simulation of an option pricing problem. To the best of our knowledge, this is the first attempt to scale up the robust MDPs paradigm.

\end{abstract}

\section{Introduction}
Markov decision processes (MDPs) are standard models for solving sequential decision making problems in stochastic dynamic environments~\cite{Puterman1994,Bertsekas96}. Given the parameters, namely, transition probability and immediate reward, the strategy that achieves maximal expected accumulated reward is considered optimal. However, in practice,
these parameters are typically estimated from noisy data, or even worse,
they may change during the execution of a policy. It is thus not surprising that
  the actual performance of the chosen strategy can significantly differ from the model's prediction due to such {\em parameter uncertainty}~--~the deviation of the model parameters from the true ones (see experiments in \cite{MannorSimesterSunTsitsiklis07}).

To mitigate   performance deviation due to parameter uncertainty, the robust MDP framework, initially proposed in~\cite{iyengar_robust_2005,NilimG05,Bagnell01}, is now a common method. In this context, it is assumed that the {\em uncertain} parameters can be any member of a known set
(termed the ``uncertainty set''), and solutions are ranked based on their performance under the (respective)
worst  parameter realizations. Under mild technical conditions, the optimal solution of a robust MDP can be solved using dynamic programming, at least for small to medium sized MDPs.

This paper considers planning in large scale robust MDPs, a setup largely untouched in literature.   It is widely known that, due to the ``curse of dimensionality'', practical problems modeled as MDPs often have prohibitively large state-spaces, under which dynamic programming becomes intractable. Many approximation schemes have been proposed to alleviate the curse of dimensionality of large scale MDPs, among them approximate dynamic programming (ADP) is a popular approach~\cite{Powell11}. ADP considers approximations of the optimal value function, for example, as a linear functional of some features of the state, that can be solved efficiently using a sampling based approach.   Inspired by the empirical success of ADP in a broad range of application domains involving large scale MDPs,   we adapt it to the robust MDP setup, and develop and analyze methods that handle large scale robust MDPs. From a high level, we indeed solve a planning problem via a reinforcement learning (RL) approach: while the robust MDP model, the parameters, and the uncertainty sets are all known, and hence the optimal solution is well defined, we still use an RL approach to approximately find the solution due to the scale of the problem~\cite{SuttonBarto98}.
Our specific contributions are the following:
\begin{enumerate}
\item A framework for approximate solution of large-scale robust MDPs.
\item Convergence proof for robust policy evaluation with linear function approximation.
\item A robust policy improvement algorithm with linear function approximation.
\item Application of the framework to the problem of option pricing.
\end{enumerate}

\section{Background}
We describe our problem formulation and some preliminaries from robust MDPs and ADP.
\subsection{Robust Markov Decision Processes}
For a discrete set $\cal{B}$, let $\mathcal M (\cal B)$ denote the set of probability measures on $\cal B$, and let $| \cal{B} |$ denote its cardinality.
A Markov Decision Process (MDP; \cite{Puterman1994}) is a tuple $\{ \cX, \cZ, \cU, P, r, \gamma \}$ where $\cX$ is a finite set of states, $\cZ$ is a (possibly empty) set of absorbing terminal states, and $\cU$ is a finite set of actions. Also, $r:\cX \times \cU\to \R$ is a deterministic and bounded reward function, $\gamma$ is a discount factor, and $P:\cX \times \cU\to \mathcal M (\cX \cup \cZ)$ denotes the probability distribution of next states, given the current state and action. We assume zero reward at terminal states.

A policy $\pi:\cX \to \mathcal M (\cU)$ maps each state to a probability distribution over the actions. The value of a state $x$ under policy $\pi$ is denoted $V^{\pi,P}(x)$ and represents the expected sum of discounted returns when starting from that state and executing $\pi,\quad$
\begin{equation*}
V^{\pi,P}(x) = \E^{\pi,P}\left[ \left. \sum_{t=0}^{\infty} \gamma^{t} r(x_t, u_t)\right| x_0 = x\right],
\end{equation*}
where $\E^{\pi,P}$ denotes expectation w.r.t. the state-action distribution induced by the transitions $P$ and the policy $\pi$. Note that for any terminal state $z\in \cZ$ and all $\pi$ and $P$ we have $V^{\pi,P}(z)=0$.

Typically in MDPs, one is interested in finding a policy that maximizes the value of certain (or all) states. When the state space is small enough, and all the parameters are known, efficient methods exist \cite{Puterman1994}. In practice, however, the state transition probabilities may not be exactly known. A widely-applied approach in this setting is  the Robust MDP (RMDP; \cite{NilimG05,iyengar_robust_2005}, also termed Ambiguous MDP). In this framework, the unknown transition probabilities are assumed to  lie in some \emph{known} uncertainty set. Such a set may be obtained, for example, from statistical confidence intervals when the transition probabilities are estimated from data.  Mathematically,
an RMDP is a tuple $\{ \cX, \cZ, \cU, \cP, r, \gamma \}$ where $\cX,\cZ,\cU,r,$ and $\gamma$ are as defined for MDPs. The uncertainty set $\cP$, where $\cP (x,u) \subset \mathcal M (\cX \cup \cZ)$, denotes a known uncertainty in the state transitions. Note that this definition implicitly assumes a \emph{rectangularity} of the uncertainty set \cite{iyengar_robust_2005}. Also note, that an RMDP is reduced to an MDP when there is no uncertainty, i.e., when $\cP(x,u)$ is a singleton for all $x$ and $u$.
In robust MDPs, one is typically interested in maximizing the \emph{worst case} performance. Formally, we define the robust value function \cite{iyengar_robust_2005,NilimG05} for a policy $\pi$ as its worst-case value function
\begin{equation*}
V^{\pi}(x) = \inf_{P\in \cP}V^{\pi,P}(x),
\end{equation*}
and we seek for the optimal robust value function
$
V^*(x) = \sup_{\pi}\left\{\inf_{P\in \cP}V^{\pi,P}(x)\right\}.
$
In \cite{iyengar_robust_2005,NilimG05} it was shown that similarly to the regular value function, the robust value function is obtained by a deterministic policy, and satisfies a (robust) Bellman recursion of the form
\begin{equation*}
V^*(x) = \sup_{u\in \cU}\left\{ r(x,u) + \gamma \inf_{P\in\cP} \E^{P} \left[V^*(x') | x,u\right] \right\},
\end{equation*}
where $x'$ denotes the state following the state $x$ and action $u$. Thus, in the sequel we shall only consider deterministic policies, and write $\pi(x)$ as the action prescribed by policy $\pi$ at state $x$.

In \cite{iyengar_robust_2005}, a policy iteration algorithm was proposed for the robust MDP framework. This algorithm repeatedly improves a policy $\pi$ by choosing greedy actions with respect to $V^{\pi}$. The key step in this approach is therefore policy evaluation - calculating $V^{\pi}$, which satisfies
\begin{equation}\label{eq:rob_bellman_pi}
V^{\pi}(x) =  r(x,\pi (x)) + \gamma \inf_{P\in\cP} \E^{P} \left[ V^{\pi}(x') | x,\pi(x)\right].
\end{equation}

The non-linear equation \eqref{eq:rob_bellman_pi} may be solved for $V^{\pi}$ using an iterative method as follows. Let us first write \eqref{eq:rob_bellman_pi} in vector notation. For some $x$ and $u$ we define the operator $\sigma_{\cP (x,u)}:\R^{|\cX|} \to \R$ as
\begin{equation*}
\sigma_{\cP (x,u)}v \doteq \inf\left\{p\T v:p\in \cP (x,u)]\right\},
\end{equation*}
where $v\in \R^{|\cX|}$ and, slightly abusing notation, we ignore transitions to terminal states in $\cP (x,u)$. Also, for some policy $\pi$ let the operator $\sigma_{\pi}:\R^{|\cX|} \to \R^{|\cX|}$ be defined such that $\left\{\sigma_{\pi}v\right\}(x) \doteq \sigma_{\cP (x,\pi(x))}v$. Then \eqref{eq:rob_bellman_pi} may be written as 
$
V^{\pi} =  r^{\pi} + \gamma \sigma_{\pi}V^{\pi}.
$
Let $T^{\pi}:\R^{|\cX|} \to \R^{|\cX|}$ denote the robust Bellman operator for a fixed policy, defined by
\begin{equation}\label{eq:T_def}
T^{\pi}v \doteq r^{\pi} + \gamma \sigma_{\pi}v.
\end{equation}
We see that $V^{\pi}$ is a fixed point of $T^{\pi}$, i.e., $V^{\pi}=T^{\pi}V^{\pi}$. Furthermore, since $T^{\pi}$ is known to be a contraction in the sup norm \cite{iyengar_robust_2005}, $V^{\pi}$ may be found by iteratively applying $T^{\pi}$ to some vector $v$.

\subsection{Projected Fixed Point Equation Methods}
For   MDPs, when the state space is large, dynamic programming methods become intractable, and one has to resort to an approximation procedure. A popular approach involves a projection of the value function onto a lower dimensional subspace by means of linear function approximation \cite{BT96}, and solving the solution of a \emph{projected} Bellman equation. We briefly review this approach.

Assume a regular MDP setting without uncertainty, where the Bellman equation \eqref{eq:rob_bellman_pi} for a fixed policy is reduced to $V^{\pi}(x) =  r(x,\pi (x)) + \gamma \E^{P} V^{\pi}(x')$. When the state space is large, calculating $V^{\pi}(x)$ for every $x$ in prohibitively computationally expensive, and a lower dimensional approximation of $V^{\pi}$ is sought.
Consider the linear approximation given by a weighted sum of features
\begin{equation*}
\tilde{V}^{\pi}(x) = \phi(x)\T w, \quad x\in \cX,
\end{equation*}
where $\phi(x)\in \R^k$,  $k<|\cX|$ contains the features of state $x$ and $w\in \R^k$ are the approximation weights. Let $\Phi \in \R^{|\cX| \times k}$ denote a matrix with the feature vectors in its rows.
A popular approach for finding $w$ is by solving the \emph{projected Bellman equation} \cite{Ber2012DynamicProgramming}, given by
\begin{equation}\label{eq:proj_eq}
\tilde{V}^{\pi} = \Pi T^{\pi} \tilde{V}^{\pi},
\end{equation}
where $\Pi$ is a projection operator onto the subspace spanned by $\Phi$ with respect to a $d$-weighted Euclidean norm. At this point we only assume that $d\in \R^{|\cX|}$ is positive. Since there is no uncertainty, $T^{\pi}$ here is a linear mapping, and Equation \eqref{eq:proj_eq} may be written in matrix form as follows
\begin{equation}\label{eq:proj_eq_mat}
\Phi\T D \Phi w = \Phi\T D r + \Phi\T D P^\pi \Phi w,
\end{equation}
where $D=\textrm{diag}(d)$, and $P^{\pi} \in \R^{|\cX| \times |\cX|}$ is the Markov transition matrix induced by policy $\pi$. Given $\Phi\T D \Phi$, $\Phi\T D r$, and $\Phi\T D P^\pi \Phi$, Eq. \eqref{eq:proj_eq_mat} may be solved for $w$ either by matrix inversion \cite{boyan2002technical}, or iteratively (known as Projected Value Iteration; PVI; \cite{Ber2012DynamicProgramming})
\begin{equation}\label{eq:proj_eq_iter_sol}
w_{k+1} = \left( \Phi\T D \Phi \right)^{-1} \left( \Phi\T D r + \gamma \Phi\T D P^\pi \Phi w_k \right).
\end{equation}
When $d$ corresponds to the steady state distribution over states for policy $\pi$, the iterative procedure in \eqref{eq:proj_eq_iter_sol} can be shown to converge using contraction properties of $\Pi T^{\pi} $ \cite{Ber2012DynamicProgramming}. For a large state space, the terms in \eqref{eq:proj_eq_iter_sol} cannot be calculated explicitly. However, the strength of this approach is that these terms may be sampled efficiently, using trajectories from the MDP \cite{Ber2012DynamicProgramming}.

Recall that our ultimate goal is policy improvement. For a regular MDP, the policy evaluation procedure described above may be combined with a policy improvement step using Least Squares Policy Iteration (LSPI; \cite{lagoudakis2003least}), which extends policy iteration to the function approximation setting.

\section{Robust Policy Evaluation}\label{sec:RADP}
In this section we propose an extension of ADP to the robust setting. We do this as follows. First, we consider policy evaluation, and extend the projected fixed point equation \eqref{eq:proj_eq} to the robust case, with the robust $T^\pi$ operator as defined in \eqref{eq:T_def}. We discuss the conditions under which this equation has a solution, and how it may be obtained. We then propose a sampling based procedure to solve the equation for large state spaces, and prove its convergence. Finally, in Section \ref{sec:ARPI}, we will use our policy evaluation procedure as part of a policy improvement algorithm in the spirit of LSPI \cite{lagoudakis2003least}, for obtaining an (approximately) optimal robust policy.
\subsection{A Projected Fixed Point Equation}
Throughout this section we consider a fixed policy $\pi$. For some positive $d$, let the projection operator $\Pi$ be defined as above.
Consider the following \emph{projected robust Bellman equation} for a fixed policy
\begin{equation}\label{eq:rob_proj_eq}
\tilde{V}^{\pi} = \Pi T^{\pi} \tilde{V}^{\pi}.
\end{equation}
Note that here, as opposed to \eqref{eq:proj_eq}, $T^{\pi}$ is not necessarily linear, and hence it is not clear whether Eq. \eqref{eq:rob_proj_eq} has a solution at all.
We now show that under suitable conditions the operator $\Pi T^{\pi}$ is a contraction and Equation \eqref{eq:rob_proj_eq} has a \emph{unique} solution. We consider two different cases, depending on the existence of terminal states $\mathcal{Z}$. Let $\hat{P}:\cX \to \mathcal M (\cX \cup \cZ)$ represent some given state transitions probabilities. Slightly abusing notation, we let $\hat{P}(x_t=j)$ denote the probability that the state at time $t$ is $j$, given that the states evolve according to a Markov chain with transitions $\hat{P}$. In the sequel, $\hat{P}$ will be used to represent the \emph{exploration} policy of the MDP in an offline learning setting. We make the following assumption on $\hat{P}$, which also defines the projection weights $d$.
\begin{assumption}\label{ass:d_def}
Either $\cZ = \emptyset$, and there exists positive numbers $d_j$ such that
\begin{equation*}
d_j = \lim_{t\to\infty} \hat{P}(x_t=j|x_0=i) \quad \forall i,j\in \cX,
\end{equation*}
or $\cZ \neq \emptyset$, and the policy underlying $\hat{P}$ is \emph{proper} \cite{Ber2012DynamicProgramming}, that is, for $\bar{t}=|\cX |$
\begin{equation*}
\hat{P}(x_{\bar{t}}\in \cZ|x_0=i)>0 \quad \forall i\in \cX,
\end{equation*}
and all states have a positive probability of being visited. In this case we let
\begin{equation*}
d_j = \sum_{t=0}^{\infty} \hat{P}(x_t=j) \quad \forall j\in \cX.
\end{equation*}
\end{assumption}
The following key assumption relates the transitions of the exploration policy and the (uncertain) transitions of the policy under evaluation. We further discuss its significance in Section \ref{sec:remarks}.
\begin{assumption}\label{ass:P_lt_hatP}
There exists $\beta \in (0,1)$ such that
$
\gamma P(x'|x,\pi(x)) \leq \beta \hat{P}(x'|x,\pi(x)), \quad \forall P\in \cP, x\in \cX, x'\in \cX.
$
\end{assumption}
Let $\| \cdot \|_d$ denote the $d$-weighted Euclidean norm, which is well-defined due to Assumption \ref{ass:d_def}. Our key insight is the following proposition, which shows that under Assumption \ref{ass:P_lt_hatP}, the robust Bellman operator is a $\beta$-contraction in $\| \cdot \|_d$.
\begin{proposition}
Let Assumptions \ref{ass:d_def} and \ref{ass:P_lt_hatP} hold. Then
$
\| T^{\pi}y - T^{\pi}z \|_d \leq \beta \| y - z \|_d
$
for all $y,z \in \R^{|\cX|}$
\end{proposition}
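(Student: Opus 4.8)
The plan is to derive the contraction directly from the structure of the $\inf$ operator, using Assumption~\ref{ass:P_lt_hatP} to convert factors of $\gamma P$ into $\beta \hat P$ and Assumption~\ref{ass:d_def} to aggregate the pointwise bound into $\|\cdot\|_d$. First I would note that the reward term cancels, so for every state $x$,
\[
\{T^{\pi}y - T^{\pi}z\}(x) = \gamma\bigl(\sigma_{\cP(x,\pi(x))}y - \sigma_{\cP(x,\pi(x))}z\bigr),
\]
and it suffices to bound the magnitude of the right-hand side pointwise and then take the $d$-weighted norm.

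Next I would control the difference of the two infima. Fixing $x$ and writing $\mathcal{Q}=\cP(x,\pi(x))$, assume without loss of generality that $\sigma_{\mathcal{Q}}y \ge \sigma_{\mathcal{Q}}z$ and pick a minimizer $p^{\ast}\in\mathcal{Q}$ attaining $(p^{\ast})\T z = \sigma_{\mathcal{Q}}z$ (an $\varepsilon$-minimizer suffices if the $\inf$ is not attained, letting $\varepsilon\to 0$ afterwards). Then
\[
0 \le \sigma_{\mathcal{Q}}y - \sigma_{\mathcal{Q}}z \le (p^{\ast})\T y - (p^{\ast})\T z = (p^{\ast})\T(y-z) \le \sum_{x'} p^{\ast}(x')\,|y(x')-z(x')|.
\]
Since $p^{\ast}$ is the transition law $P^{\ast}(\cdot\,|x,\pi(x))$ of some $P^{\ast}\in\cP$, multiplying by $\gamma$ and applying Assumption~\ref{ass:P_lt_hatP} gives $\gamma\,p^{\ast}(x') \le \beta\,\hat P(x'|x,\pi(x))$, whence $|\{T^{\pi}y - T^{\pi}z\}(x)| \le \beta \sum_{x'} \hat P(x'|x,\pi(x))\,|y(x')-z(x')|$. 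Collecting these over all $x$ yields the entrywise vector inequality $|T^{\pi}y - T^{\pi}z| \le \beta\,\hat P^{\pi}\,|y-z|$, where $\hat P^{\pi}$ has entries $\hat P(x'|x,\pi(x))$.

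Finally I would pass to the weighted norm. Squaring, summing against $d$, and applying Cauchy--Schwarz (equivalently Jensen; the rows of $\hat P^{\pi}$ sum to at most one in the proper case, which only helps) gives
\[
\|T^{\pi}y - T^{\pi}z\|_d^2 \le \beta^2 \sum_x d_x \Bigl(\sum_{x'}\hat P(x'|x,\pi(x))\,|y(x')-z(x')|\Bigr)^2 \le \beta^2 \sum_x d_x \sum_{x'}\hat P(x'|x,\pi(x))\,|y(x')-z(x')|^2.
\]
Interchanging the order of summation reduces the claim to the key relation $\sum_x d_x\,\hat P(x'|x,\pi(x)) \le d_{x'}$ for every $x'$, after which the bound becomes $\beta^2\sum_{x'} d_{x'}|y(x')-z(x')|^2 = \beta^2\|y-z\|_d^2$.

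The main obstacle is establishing this last relation, namely that $\hat P^{\pi}$ is a non-expansion in $\|\cdot\|_d$; this is precisely where Assumption~\ref{ass:d_def} enters. In the ergodic case ($\cZ=\emptyset$) it holds with equality because $d$ is the stationary distribution of $\hat P^{\pi}$. In the proper case it follows from $d_j=\sum_{t=0}^{\infty}\hat P(x_t=j)$ by a one-step shift: $\sum_x d_x\,\hat P(j|x,\pi(x)) = \sum_{t=1}^{\infty}\hat P(x_t=j) = d_j - \hat P(x_0=j) \le d_j$. Combining this non-expansion with the modulus $\beta<1$ (rather than $\gamma$) is the crux of the argument, and it explains why Assumption~\ref{ass:P_lt_hatP} is phrased through the dominating exploration measure $\hat P$.
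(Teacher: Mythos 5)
Your proof is correct and follows essentially the same route as the paper: bound the difference of the two infima by evaluating both at an ($\epsilon$-)minimizer for one of them, convert $\gamma P$ into $\beta\hat P$ via Assumption~\ref{ass:P_lt_hatP} to get the entrywise bound $|T^{\pi}y-T^{\pi}z|\le\beta\hat P^{\pi}|y-z|$, and finish with the non-expansiveness of $\hat P^{\pi}$ in $\|\cdot\|_d$. The only difference is that you prove that non-expansion property explicitly (Jensen plus the stationarity/one-step-shift identity for $d$), where the paper simply cites it as a known result.
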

\begin{proof}
Fix $x \in \cX$, and assume that $T^{\pi}y(x) \geq T^{\pi}z(x)$. Choose some $\epsilon > 0$, and $P_x \in \cP$ such that
\begin{equation}\label{eq:proof_1}
\E^{P_x} \left[ \left. z(x') \right| x,\pi(x) \right] \leq \inf_{P\in\cP} \E^{P} \left[ \left. z(x') \right| x,\pi(x) \right] + \epsilon.
\end{equation}
Also, note that by definition
\begin{equation}\label{eq:proof_2}
\inf_{P\in\cP} \E^{P} \left[ \left. y(x') \right| x,\pi(x) \right] \leq  \E^{P_x} \left[ \left. y(x') \right| x,\pi(x) \right].
\end{equation}
Now, we have
\begin{equation*}
\begin{split}
0   \leq T^{\pi}y(x) - T^{\pi}z(x)
    &\leq (\gamma \E^{P_x} \left[ \left. y(x') \right| x,\pi(x) \right] ) - (\gamma \E^{P_x} \left[ \left. z(x') \right| x,\pi(x) \right] - \gamma \epsilon) \\
    &= \gamma \E^{P_x} \left[ \left. y(x') - z(x') \right| x,\pi(x) \right] +\gamma \epsilon \\
    &\leq \beta \E^{\hat{P}} \left[ \left. \left| y(x') - z(x') \right| \: \right| x,\pi(x) \right] +\gamma \epsilon,
\end{split}
\end{equation*}
where the second inequality is by \eqref{eq:proof_1} and \eqref{eq:proof_2}, and the last inequality is by Assumption \ref{ass:P_lt_hatP}.
Conversely, if $T^{\pi}z(x) \geq T^{\pi}y(x)$, following the same procedure we obtain
$
0 \leq T^{\pi}z(x) - T^{\pi}y(x) \leq \beta \E^{\hat{P}} \left[ \left. \left| y(x') - z(x') \right| \: \right| x,\pi(x) \right]+\gamma \epsilon,
$
and we therefore conclude that
$
 \left| T^{\pi}y(x) - T^{\pi}z(x) \right| \leq \beta \E^{\hat{P}} \left[ \left. \left| y(x') - z(x') \right|  \: \right| x,\pi(x) \right] +\gamma \epsilon.
$
Since $\epsilon$ was arbitrary, we have that
$
 \left| T^{\pi}y(x) - T^{\pi}z(x) \right| \leq \beta \E^{\hat{P}} \left[ \left. \left| y(x') - z(x') \right| \: \right| x,\pi(x) \right]
$
for all $x$, and therefore
\begin{equation*}
\begin{split}
 \left\| T^{\pi}y - T^{\pi}z \right\|_d &\leq \beta \left\|\hat{P} \left| y - z\right| \right\|_d
 \leq \beta \left\| y - z \right\|_d,
\end{split}
\end{equation*}
where in last equality we used the well-known result that the state transition matrix $\hat{P}$  is contracting in the $d$-weighted Euclidean norm \cite{Ber2012DynamicProgramming}.
\end{proof}
Since the projection operator $\Pi$ is known to be non-expansive in the $d$-weighted norm \cite{Ber2012DynamicProgramming}, we have the following corollary.
\begin{corollary}\label{cor:Pi_T_Contraction}
Let Assumptions \ref{ass:d_def} and \ref{ass:P_lt_hatP} hold. Then the projected robust Bellman operator $\Pi T^{\pi}$ is a $\beta$-contraction in the $d$-weighted Euclidean norm.
\end{corollary}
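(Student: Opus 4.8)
The plan is simply to compose the two ingredients already established: the Proposition, which shows that $T^{\pi}$ is a $\beta$-contraction in $\| \cdot \|_d$, and the non-expansiveness of the projection $\Pi$ in the same norm, quoted just before the corollary from \cite{Ber2012DynamicProgramming}. The key observation I would make first is that $\Pi$ is a \emph{linear} operator, being the orthogonal projection onto the subspace spanned by $\Phi$ with respect to the $d$-weighted inner product. Hence, for any $y,z \in \R^{|\cX|}$, the difference of projected images can be pulled inside the projection, $\Pi T^{\pi} y - \Pi T^{\pi} z = \Pi\left( T^{\pi} y - T^{\pi} z \right)$.

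With this in place the estimate is a two-step chain. Applying non-expansiveness of $\Pi$ to the vector $T^{\pi} y - T^{\pi} z$ gives
\begin{equation*}
\left\| \Pi T^{\pi} y - \Pi T^{\pi} z \right\|_d = \left\| \Pi\left( T^{\pi} y - T^{\pi} z \right) \right\|_d \leq \left\| T^{\pi} y - T^{\pi} z \right\|_d,
\end{equation*}
and then invoking the Proposition to bound the right-hand side yields $\left\| T^{\pi} y - T^{\pi} z \right\|_d \leq \beta \left\| y - z \right\|_d$. Concatenating the two inequalities produces $\left\| \Pi T^{\pi} y - \Pi T^{\pi} z \right\|_d \leq \beta \left\| y - z \right\|_d$ for all $y,z$, which is precisely the asserted $\beta$-contraction of $\Pi T^{\pi}$ in the $d$-weighted Euclidean norm.

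I do not expect any genuine obstacle here, since the substantive work — establishing that the \emph{nonlinear} robust operator $T^{\pi}$ contracts with modulus $\beta$ under Assumption \ref{ass:P_lt_hatP} — was already carried out in the Proposition. The only point requiring a moment's care is the use of linearity of $\Pi$ to write the difference as a single projected vector; this is legitimate because an orthogonal projection is linear, so non-expansiveness (a property of $\Pi$ as a map, $\|\Pi v\|_d \le \|v\|_d$) applies directly to $v = T^{\pi} y - T^{\pi} z$. Because $\beta \in (0,1)$ by Assumption \ref{ass:P_lt_hatP}, the contraction is strict, so by the Banach fixed point theorem the projected robust Bellman equation \eqref{eq:rob_proj_eq} has a unique solution, which is the fact the corollary is set up to deliver.
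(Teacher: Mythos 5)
Your proof is correct and is exactly the argument the paper intends: the corollary follows by composing the $\beta$-contraction of $T^{\pi}$ from the Proposition with the non-expansiveness of $\Pi$ in $\| \cdot \|_d$, which the paper states in the sentence immediately preceding the corollary. Your extra remark about using linearity of $\Pi$ to write $\Pi T^{\pi}y - \Pi T^{\pi}z = \Pi(T^{\pi}y - T^{\pi}z)$ is a sound (and slightly more careful) rendering of the same step.
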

The contraction property in Corollary \ref{cor:Pi_T_Contraction} guarantees an error bound for the fixed point approximation on the order of $1/(1-\beta)$ \cite{Ber2012DynamicProgramming}. It also suggests a straightforward procedure for solving Equation \eqref{eq:rob_proj_eq} which we describe next.

\subsection{Robust Projected Value Iteration}
A natural method for for solving Equation \eqref{eq:rob_proj_eq} is the robust equivalent of PVI
\begin{equation}\label{eq:RPVI}
\Phi w_{k+1} = \Pi T^{\pi} \left( \Phi w_{k} \right).
\end{equation}
Corollary \ref{cor:Pi_T_Contraction} guarantees that the iterates of \eqref{eq:RPVI} converge to the fixed point of $\Pi T^{\pi}$. The algorithm \eqref{eq:RPVI} may be written explicitly in matrix form (see \cite{Ber2012DynamicProgramming}) as
\begin{equation}\label{eq:rob_proj_eq_iter_sol}
w_{k+1} = \left( \Phi\T D \Phi \right)^{-1} \left( \Phi\T D r + \gamma \Phi\T D \sigma_{\pi}( \Phi w_k ) \right).
\end{equation}
We refer to the algorithm in \eqref{eq:rob_proj_eq_iter_sol} as \emph{robust projected value iteration} (RPVI).
Note that a matrix inversion approach would not be applicable here, as \eqref{eq:rob_proj_eq_iter_sol} is not linear due to  non-linearity of  $\sigma_{\pi}(\cdot)$.

For a large state space, computing the terms in \eqref{eq:rob_proj_eq_iter_sol} exactly is intractable. For this case we propose a sampling procedure for estimating these terms, as described next.

\subsection{A Sampling Based Approach}
When the state space is too large for the terms in Equation \eqref{eq:rob_proj_eq} to be computed exactly, one may resort to a sampling based procedure. This approach is popular in the RL and ADP literature, and has been used successfully on problems with very large state spaces \cite{Powell11}. Here, we describe how it may be applied for the robust MDP setting.

Assume that we have obtained a long trajectory from an MDP with transition probabilities $\hat{P}$, while following policy $\pi$. We denote this data by $x_0,u_0,r_0,x_1,u_1,r_1,\dots,x_N,u_N,r_N$. A very useful property of the terms in \eqref{eq:rob_proj_eq_iter_sol} is that they may be estimated from the data by\footnote{These estimates are for the case $\cZ = \emptyset$ in Assumption \ref{ass:d_def}. Modifying these estimates for the case $\cZ \neq \emptyset$ is straightforward, along the lines of Chapter 7.1 in \cite{Ber2012DynamicProgramming}.}
\begin{equation*}
\Phi\T D \Phi \sim \frac{1}{N}\sum_{t=0}^{N-1}\phi(x_t)\phi(x_t)\T, \quad  \Phi\T D r \sim \frac{1}{N}\sum_{t=0}^{N-1}\phi(x_t)r(x_t,u_t)\T,
\end{equation*}
and
\begin{equation}\label{eq:sampled_sigma}
\Phi\T D \sigma_{\pi}( \Phi w_k ) \sim \frac{1}{N}\sum_{t=0}^{N-1}\phi(x_t) \sigma_{\cP (x_t,u_t)}(\Phi w_k).
\end{equation}
Using the law of large numbers, it may be proved \footnote{The proof is similar to the case without uncertainty, detailed in \cite{Ber2012DynamicProgramming}.} that these estimates converge with probability 1 to their respective terms in \eqref{eq:rob_proj_eq_iter_sol} as $N\to\infty$. Together with Corollary \ref{cor:Pi_T_Contraction} we have the following convergence result. The proof is straightforward and omitted.
\begin{proposition}
Let Assumptions \ref{ass:d_def} and \ref{ass:P_lt_hatP} hold. Consider the RPVI algorithm with the terms in \eqref{eq:rob_proj_eq_iter_sol} replaced by their sampled counterparts \eqref{eq:sampled_sigma}. Then as $N\to\infty$ and $k\to\infty$, $w_k$ converges with probability 1 to $w^*$, and $\Phi w^*$ is the unique solution of \eqref{eq:rob_proj_eq}.
\end{proposition}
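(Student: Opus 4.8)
The plan is to view the sampled recursion as a fixed-point iteration $w_{k+1}=F_N(w_k)$ of an operator that converges, with probability $1$, to the exact operator $F$ underlying \eqref{eq:rob_proj_eq_iter_sol}, and then to transport the contraction already established for $F$ onto $F_N$. Write the exact update as $w_{k+1}=F(w_k)$ with $F(w)=\left(\Phi\T D \Phi\right)^{-1}\!\left(\Phi\T D r+\gamma\,\Phi\T D\,\sigma_{\pi}(\Phi w)\right)$, and the sampled update as $w_{k+1}=F_N(w_k)$, in which $\Phi\T D\Phi$, $\Phi\T D r$ and $\Phi\T D\sigma_{\pi}(\Phi w)$ are replaced by the empirical averages of \eqref{eq:sampled_sigma}. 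Assuming, as is standard, that $\Phi$ has full column rank and recalling that $d>0$ by Assumption \ref{ass:d_def}, the matrix $\Phi\T D\Phi$ is invertible, so $F$ is well defined and $\Phi F(w)=\Pi T^{\pi}(\Phi w)$. By Corollary \ref{cor:Pi_T_Contraction}, $\Pi T^{\pi}$ is a global $\beta$-contraction in $\|\cdot\|_d$, hence $F$ is a $\beta$-contraction in the coordinate norm $\|w\|\doteq\|\Phi w\|_d$; thus $F$ has a unique fixed point $w^*$, the vector $\Phi w^*$ is the unique solution of \eqref{eq:rob_proj_eq}, and the exact iterates converge to $w^*$.

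First I would establish that $F_N\to F$ almost surely. Under Assumption \ref{ass:d_def} the empirical state-visitation frequencies $\hat{d}_N$ along the trajectory converge to $d$, so the law of large numbers for the (ergodic, resp.\ proper) exploration chain gives $\Phi\T\hat{D}_N\Phi\to\Phi\T D\Phi$ and the analogous convergence of the reward term with probability $1$; continuity of matrix inversion at the invertible $\Phi\T D\Phi$ then yields convergence of the inverse for $N$ large. For the nonlinear term, for each fixed $w$ the summand $\phi(x_t)\,\sigma_{\cP(x_t,u_t)}(\Phi w)$ is a fixed function of $(x_t,u_t)$, so the same law of large numbers gives pointwise convergence of \eqref{eq:sampled_sigma} to $\Phi\T D\sigma_{\pi}(\Phi w)$ almost surely. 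Because each map $v\mapsto\sigma_{\cP(x,u)}(v)=\inf\{p\T v:p\in\cP(x,u)\}$ is an infimum of linear functions with probability-vector gradients and hence $1$-Lipschitz, $F$ and all $F_N$ are equi-Lipschitz in $w$; combined with pointwise a.s.\ convergence on a countable dense set, this upgrades the limit to $F_N\to F$ uniformly on compact sets, almost surely.

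The step I expect to be the main obstacle is transporting the contraction from $F$ to $F_N$, which in the affine non-robust case is immediate from continuity of the spectral radius of the iteration matrix, but here is clouded by the nonlinearity of $\sigma_{\pi}$. I would handle it by re-running the contraction argument of the Proposition with the empirical quantities in place of the population ones: $F_N$ is the coordinate map of $\Pi_N T^{\pi}$, where $\Pi_N$ is the projection in the $\hat{d}_N$-weighted norm, and since $\hat{P}$ is contracting in $\|\cdot\|_d$ while $\hat{d}_N\to d$, the same chain of inequalities (non-expansiveness of $\Pi_N$, Assumption \ref{ass:P_lt_hatP}, contraction of $\hat{P}$) yields a contraction modulus $\beta_N\to\beta$. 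Thus for all $N$ large enough $F_N$ is a contraction with modulus bounded away from $1$.

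Finally I would close with a perturbed-contraction argument. For each such fixed (large) $N$ the contraction of $F_N$ gives $w_k\to w_N^*$ as $k\to\infty$, where $w_N^*$ is the unique fixed point of $F_N$; and because $F_N\to F$ uniformly near $w^*$ while $F$ contracts, $w_N^*\to w^*$ as $N\to\infty$. Reading the limits in this order — iterate $k\to\infty$ at fixed $N$, then send $N\to\infty$ — delivers the claimed almost-sure convergence $w_k\to w^*$, with $\Phi w^*$ the unique solution of \eqref{eq:rob_proj_eq}.
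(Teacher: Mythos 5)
Your proof is correct and follows exactly the route the paper intends: the paper omits the argument, saying only that the sampled terms converge by the law of large numbers and that the result then follows from Corollary \ref{cor:Pi_T_Contraction}. Your identification of the sampled update as the projected Bellman iteration under the empirical weights $\hat{d}_N$ (so that $F_N$ is itself a contraction for large $N$), followed by the perturbed-contraction limit $w_N^* \to w^*$, is a correct filling-in of the details the paper labels ``straightforward.''
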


In Eq. \eqref{eq:sampled_sigma}, the calculation of $\sigma_{\cP (x_t,u_t)}(\Phi w_k)$ requires a model, and, depending on the uncertainty set and state transitions, may be computationally demanding. One very natural class of models, proposed in \cite{iyengar_robust_2005,NilimG05}, is constructed from empirical state transitions $x_t \to x_{t+1}$, and the uncertainty set corresponds to confidence regions associated with probability density estimation. In these studies, efficient methods for performing the minimization in $\sigma_{\cP (x_t,u_t)}$ were suggested. In the case of binary transitions, as in our option pricing example of Section \ref{sec:options}, performing the minimization is trivial.



\subsection{Some Remarks on Assumption \ref{ass:P_lt_hatP}}\label{sec:remarks}
Assumption \ref{ass:P_lt_hatP} may appear quite restrictive, especially when the discount factor $\gamma$ approaches 1. At present, we are not aware of a relaxation that will work for general features. However, we emphasize that the inequality in Assumption \ref{ass:P_lt_hatP} is not required for transitions to a terminal state. This is significant, for example, in optimal stopping problems. There, if Assumption \ref{ass:P_lt_hatP} holds for an exploration policy that never stops, it can be shown to hold for all policies; we discuss this in more detail in Section \ref{sec:options}.

One may question whether this exception for terminal states is due to the fact that their value is not approximated, and whether we can cope with states for which the assumption does not hold by not approximating their values. Unfortunately, this is not the case, as we show in the supplementary material that even if Assumption \ref{ass:P_lt_hatP} fails for a single state, and for that state there is no approximation, iteratively applying $\Pi T^{\pi}$ may diverge.

We note that a similar difficulty arises in off-policy RL \cite{bertsekas_yu_2009,sutton_fast_2009} (in fact, our Assumption \ref{ass:P_lt_hatP} is similar to an assumption in \cite{bertsekas_yu_2009}) , where some algorithms are shown to converge to a solution of \eqref{eq:proj_eq} even when $\Pi T^{\pi}$ is not a contraction \cite{yu_convergence_2010,sutton_fast_2009}. However, in these cases not much can be said about the solution itself, and we therefore do not pursue such an approach here.

Finally, we note that for averager type function approximation \cite{gordon1995stable}, $\Pi$ contracts in the sup norm, and since $T^{\pi}$ also contracts in the sup norm \cite{iyengar_robust_2005}, $\Pi T^{\pi}$ contracts regardless of Assumption \ref{ass:P_lt_hatP}.

\section{Robust Approximate Policy Iteration}\label{sec:ARPI}
In this section we propose a policy improvement algorithm, driven by the RPVI method of the previous section.

We begin by introducing the state-action value function $Q^{\pi}(x,u)$
\begin{equation*}
Q^{\pi}(x,u) = \inf_{P\in \cP} \E^{\pi,P}\left[ \left. \sum_{t=0}^{\infty} \gamma^{t} r(x_t, u_t)\right| x_0 = x, u_0 = u\right],
\end{equation*}
which is more convenient for applying the optimization step of policy iteration than $V^{\pi}(x)$. Again, we assume linear function approximation of the form
$
\tilde{Q}^{\pi}(x,u) = \phi(x,u)\T w,
$
where $\phi(x,u)\in \R^k$ is a state-action feature vector and $w\in \R^k$ is a parameter vector. Note that $Q^{\pi}(x,u)$ may be seen as the value function of an equivalent RMDP with states in $\cX \times \cU$, therefore the policy evaluation algorithm of Section \ref{sec:RADP} applies. Also, note that given some $w$, a greedy policy $\pi^*_w(x)$ at state $x$ with respect to that approximation may be computed by
\begin{equation*}
\pi^*_w(x) = \argmax_u \phi(x,u)\T w,
\end{equation*}
and we write
$
\phi^*_w(x) = \phi(x,\pi^*_w(x)),
$
and let $\Phi^*_w$ denote a matrix with $\phi^*_w(x)$ in its rows.

The Approximate Robust Policy Iteration (ARPI) algorithm is initialized with an arbitrary parameter vector $w_0$. At iteration $i+1$, we estimate the parameter $w_{i+1}$ of the \emph{greedy} policy with respect to $w_i$ as follows. We first initialize $\theta_0\in \R^k$ to some arbitrary value, and then iterate on $\theta$:
\begin{equation}\label{eq:theta_iterate}
\theta_{j+1} = \left( \Phi\T D \Phi \right)^{-1} \left( \Phi\T D r + \gamma \Phi\T D \sigma_{\pi}( \Phi^*_{w_i} \theta_j ) \right),
\end{equation}
where the terms in \eqref{eq:theta_iterate} are estimated from data (cf. \eqref{eq:rob_proj_eq_iter_sol}) according to
$
\Phi\T D \Phi \sim \frac{1}{N}\sum_{t=0}^{N-1}\phi(x_t,u_t)\phi(x_t,u_t)\T, \quad
\Phi\T D r \sim \frac{1}{N}\sum_{t=0}^{N-1}\phi(x_t,u_t)r(x_t,u_t)\T,
$ and
$
\Phi\T D \sigma_{\pi}( \Phi^*_{w_i} \theta_j ) \sim \frac{1}{N}\sum_{t=0}^{N-1}\phi(x_t,u_t) \sigma_{\cP (x_t,u_t)}(\Phi^*_{w_i} \theta_j).
$
After $\theta$ has converged, we set $w_{i+1}$ to its final value.

For comparison, in regular LSPI \cite{lagoudakis2003least} the iteration on $\theta$ is not needed, as the policy evaluation equation \eqref{eq:proj_eq} is linear, and may be solved using a least squares approach (LSTD; \cite{boyan2002technical}). Computationally, the contraction property of Corollary \ref{cor:Pi_T_Contraction} guarantees a linear convergence rate for the $\theta$ iteration, therefore the addition of this step should not impact performance significantly. Also, note that the computation of $\Phi\T D \Phi$ and $\Phi\T D r$ only needs to be done once.

\section{Applications}\label{sec:options}
In this section we discuss applications of robust ADP. We first consider optimal stopping problems, a subclass of MDPs, for which we can show that Assumption \ref{ass:P_lt_hatP} may be satisfied broadly. We then present an empirical evaluation on the problem of option pricing -- a finite horizon continuous state space optimal stopping problem, for which an exact solution is intractable.

\subsection{Optimal Stopping Problems}\label{sec:optimal_stopping}
An optimal stopping problem is an RMDP where the only choice is when to terminate the process. Formally, the action set is binary $\cU=\{0,1\}$, and executing $u=1$ from any state always transitions to a terminal state with probability 1 (and no uncertainty). Let $\hat{\pi}$ denote a policy that never chooses to terminate, i.e., $\hat{\pi}(x)=0$, $\forall x$. We now show that if Assumption \ref{ass:P_lt_hatP} is satisfied for $\hat{\pi}$, then it is immediately satisfied for all other policies. The proof is in the supplementary material.
\begin{proposition}\label{prop:OSprop}
Consider an optimal stopping problem, and let Assumption \ref{ass:P_lt_hatP} hold for $\hat{\pi}$. Then, for every policy $\pi$ we have \begin{equation*}\label{eq:OSprop1}
\gamma P(x'|x,\pi(x)) \leq \beta \hat{P}(x'|x,\pi(x)), \quad \forall P\in \cP, x\in \cX, x'\in \cX.
\end{equation*}
and
\begin{equation*}\label{eq:OSprop2}
\gamma P(x',\pi(x')|x,\pi(x)) \leq \beta \hat{P}(x',\pi(x')|x,\pi(x)), \quad \forall P\in \cP, x\in \cX, x'\in \cX.
\end{equation*}
\end{proposition}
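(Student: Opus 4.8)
The plan is to exploit the binary-action structure of optimal stopping: at each state the policy either continues ($u=0$) or stops ($u=1$), and by definition the stopping action enters a terminal state with probability one and without uncertainty. First I would record the content of the hypothesis. Applying Assumption \ref{ass:P_lt_hatP} to $\hat{\pi}$ and substituting $\hat{\pi}(x)=0$ for all $x$ gives the single fact I will reuse throughout: $\gamma P(x'|x,0)\le \beta \hat{P}(x'|x,0)$ for every $P\in\cP$ and every $x,x'\in\cX$.

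For the first inequality I would argue by a case split on the action $\pi(x)$ prescribed at $x$. If $\pi(x)=0$, then $\pi$ agrees with $\hat{\pi}$ at $x$, so $P(x'|x,\pi(x))=P(x'|x,0)$ and $\hat{P}(x'|x,\pi(x))=\hat{P}(x'|x,0)$, and the claim is exactly the recorded hypothesis instance. If $\pi(x)=1$, then the process moves to a terminal state with probability one, so $P(x'|x,1)=0$ for every non-terminal $x'\in\cX$; the left-hand side is then zero while the right-hand side is nonnegative, and the inequality holds trivially. This case is precisely where the exemption of terminal transitions from Assumption \ref{ass:P_lt_hatP} (noted in Section \ref{sec:remarks}) is used: we never need the bound for the transition into $\cZ$ produced by stopping.

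For the second inequality I would lift the argument to the state-action RMDP on $\cX\times\cU$ that underlies the ARPI scheme of Section \ref{sec:ARPI}, so that this step establishes Assumption \ref{ass:P_lt_hatP} for the $Q$-function formulation. The key observation is that the state-action transition from $(x,\pi(x))$ to $(x',\pi(x'))$ factors through the state transition: since the evaluated policy deterministically selects $\pi(x')$ upon reaching $x'$, the next-action indicator is unity on both sides, giving $P(x',\pi(x')|x,\pi(x))=P(x'|x,\pi(x))$ and likewise $\hat{P}(x',\pi(x')|x,\pi(x))=\hat{P}(x'|x,\pi(x))$. The second inequality therefore collapses to the first, and the same case split on $\pi(x)$ finishes the argument.

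I expect the main obstacle to be conceptual rather than computational, and it lies entirely in this factorization. One must be careful that both the worst-case transition and the exploration transition are indexed by the \emph{same} next action $\pi(x')$. Were the exploration side permitted to substitute its own never-stop action at $x'$ while the evaluation side uses $\pi(x')$, the two transitions could target different state-action pairs, and the bound would fail at any $x'$ with $\pi(x')=1$ yet $P(x'|x,\pi(x))>0$. Confirming that the Bellman-operator structure forces the same next action on both sides, together with verifying that the stopping action only ever produces exempt terminal transitions, is the crux; once both are in place the reduction to the state-level bound is immediate.
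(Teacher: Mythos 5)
Your proof is correct and follows the argument the paper intends: the case split on $\pi(x)$ (with $\pi(x)=0$ reducing to the hypothesis on $\hat{\pi}$ and $\pi(x)=1$ producing only exempt transitions into $\cZ$), followed by collapsing the state-action inequality to the state-level one via the deterministic next-action indicator. Your closing remark correctly pinpoints the one genuine subtlety — that both sides of the second inequality must be read as indexed by the \emph{same} next action $\pi(x')$, exactly as the proposition is written — so nothing is missing.
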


\subsection{Option Pricing}\label{sec:option_pricing}
In this section we apply ARPI to the problem of pricing American-style options. An American-style put option is a contract which gives the owner the right, but not the obligation, to sell an asset at a specified strike price $K$ on or before some maturity time $T$. Letting $x_t$ denote the price (state) of the asset at time $t\leq T$, the immediate payoff of executing a put option at that time is therefore $\max\left( 0, K-x_t \right)$. Assuming Markov state transitions, an optimal execution policy may be found by solving a finite horizon optimal stopping problem, and the expected profit under that policy is termed the `fair' price of the option. Since the state space is typically continuous, an exact solution is infeasible, calling for approximate, sampling based techniques. Previous studies \cite{tsitsiklis2001options,li2009options} have proposed RL solutions for this task, and shown their utility. Here we extend this approach.

One challenge of option pricing is that the underlying model is never truly known, but instead we can only access historical data in the form of state trajectories (e.g., stock prices over time). Naturally, uncertainty in the options value as predicted from this data should reflect in its price. Here, we propose to price the option according to its \emph{robust} value, thereby treating uncertainty in a well-founded manner.

We now show how the option pricing problem may be formulated as an optimal stopping RMDP, and then present our empirical results of applying the ARPI algorithm to the problem.

\subsubsection{An RMDP Formulation}
The option pricing problem may be formulated as an RMDP as follows. To account for the finite horizon, we include time explicitly in the state, thus, the state at time $t$ is $\{ x_t, t \}$. The action set is binary, where $1$ stands for executing the option and $0$ for continuing to hold it. Once an option is executed, or when $t=T$, a transition to a terminal state takes place. Otherwise, the state transitions to $\{ x_{t+1}, t+1 \}$ where $x_{t+1}$ is determined by a stochastic kernel $\hat{P} (x'|x,t)$.
The reward for executing $u=1$ at state $x$ is $g(x) \doteq \max\left( 0, K-x \right)$ and zero otherwise.

Note that the state-action values for execution is known in advance, for we have $Q(\{x,t\},u=1) = g(x)$ by definition; therefore, we only need to estimate the value of continuation. We use linear function approximation $\tilde{Q}^{\pi}(\{x,t\},u=0) = \phi(\{x,t\})\T w$, and the ARPI update equation \eqref{eq:theta_iterate} in this case may be written as
$
\theta_{j+1} = \left( \Phi\T D \Phi \right)^{-1} \left( \gamma \Phi\T D \sigma_{\pi}( \nu ) \right),
$
where
$\nu(x,t)$ equals $g(x)$ if $g(x)>\phi(\{x,t\})\T w_i$, and equals $\phi(\{x,t\})\T \theta_j$ otherwise.
By Proposition \ref{prop:OSprop}, if the trajectories are obtained by a policy that never chooses to terminate, ARPI may be used safely as each policy evaluation step is guaranteed to converge.

\subsubsection{Results}
We focus on in-the-money options, where $K$ is equal to the initial price $x_0$. Our price fluctuation model follows a Bernoulli distribution \cite{cox1979option},
\begin{equation*}
x_{t+1} = \begin{cases}
f_{u} x_t, & \textrm{w.p. }p \\
f_{d} x_t, & \textrm{w.p. }1-p
\end{cases},
\end{equation*}
where the up and down factors, $f_u$ and $f_d$, are constant. Our empirical evaluation proceeds as follows.
In each experiment, we generate $N_{data}$ trajectories of length $T$ from the true model. From these trajectories we form the maximum likelihood estimate of the up probability $\hat{p}$, and the $100(1 - \alpha)\%$ confidence intervals $\hat{p}_{-}$ and $\hat{p}_{+}$ using the Clopper-Pearson method \cite{clopper1934use}, which constructs our uncertain model $M_{robust}$. We also build a model without uncertainty $M_{nominal}$ by setting $\hat{p}_{-}=\hat{p}_{+}=\hat{p}$. Using $\hat{p}$, we then simulate $N_{sim}$ trajectories of length $T$ (this corresponds to a policy that never executes the option), where $x_0=K+\epsilon$, and $\epsilon$ is uniformly distributed in $[-\delta,\delta]$. These trajectories are used as input data for the ARPI algorithm of Section \ref{sec:ARPI}.

For our linear function approximation we chose 2-dimensional (for $x$ and $t$) radial basis function (RBF) features. In comparison, \cite{li2009options} used Laguerre polynomials for $x$ and several monotone functions for $t$. We initially experimented with these features as well, but then opted for the RBF's, which displayed significantly better performance. We attribute this performance improvement to the non-separable (in $x$ and $t$) nature of the value function, a property that is not captured by the representation of \cite{li2009options}.

Let $\pi_{robust}$ and $\pi_{nominal}$ denote the policies found by ARPI using $M_{robust}$ and $M_{nominal}$, respectively. We evaluate the performance of $\pi_{robust}$ and $\pi_{nominal}$ using $N_{test}$ trajectories obtained from the \emph{true} model. In Figure \ref{fig:results} we compare the average $p-$percentiles (averaged over 200 independent experiments) of the total reward obtained by $\pi_{robust}$ and $\pi_{nominal}$, for different values of $\alpha$ and $N_{data}$. As expected, the robust policy gains higher payoff in the lower percentiles, while sacrificing payoff in higher percentiles, and displays a risk-averse behavior. The effect is proportional to the uncertainty, controlled by $\alpha$ and $N_{data}$.

The parameters for the experiments were chosen to balance the different factors in the problem, and are provided in the supplementary material, as well as Matlab code for reproducing these results.

\begin{figure}[h]
\begin{center}
\includegraphics[scale=0.33, trim=20 0 20 0, clip=true]{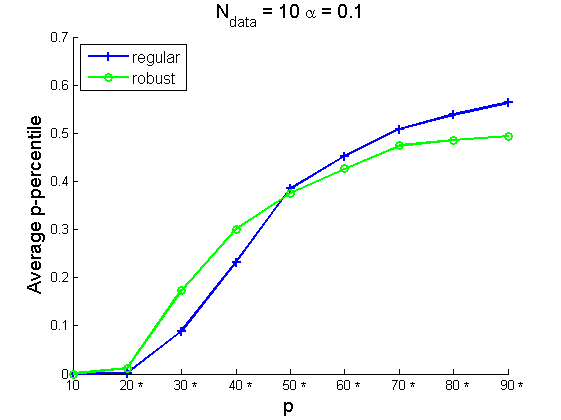}
\includegraphics[scale=0.33, trim=20 0 20 0, clip=true]{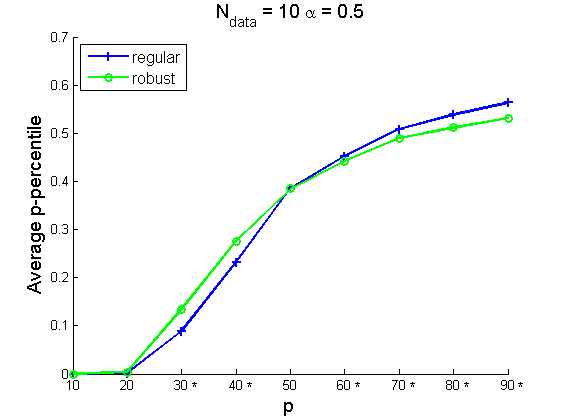}
\includegraphics[scale=0.33, trim=20 0 20 0, clip=true]{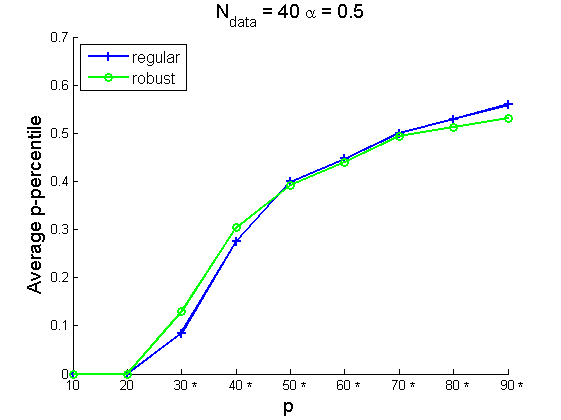}
\end{center}
\caption{\label{fig:results}Performance of robust vs. regular policies. The $p-$percentiles of the total reward, averaged over 200 independent runs of the experiment, are shown for different values of $\alpha$ and $N_{data}$. Percentiles for which the difference in performance is statistically significant (according to a paired t-test, $p<0.05$) are marked by an asterisk.}
\end{figure}

\section{Conclusion}
This work presented a novel framework for solving {\em large-scale} robust Markov decision processes. To the best of our knowledge, such problems are beyond the capabilities of previous studies, which focused on exact solutions and hence suffer from the ``curse of dimensionality''. Our approach to tackling the planning problem is through reinforcement learning methods: we reduce the dimensionality of the robust value function using linear function approximation, and employ an iterative sampling based procedure to learn the approximation weights. We presented both formal guarantees and empirical evidence to the usefulness of our approach in general robust MDPs, and optimal stopping problems in particular.

\bibliographystyle{abbrv}
\bibliography{RobustArxiv2013}
%
%
%
%

\end{document}